\newtheorem{lemma}{Lemma}
\def\tsc#1{\csdef{#1}{\textsc{\lowercase{#1}}\xspace}}
\begin{document}
\let\WriteBookmarks\relax
\def\floatpagepagefraction{1}
\def\textpagefraction{.001}

% Short title
\shorttitle{Variational Learning of Unobserved Confounders}

% Short author
\shortauthors{Yonghe Zhao et~al.}

% Main title of the paper
\title [mode = title]{VLUCI: Variational Learning of Unobserved Confounders for Counterfactual Inference}                      
% Title footnote mark
% eg: \tnotemark[1]
% \tnotemark[1,2]

% Title footnote 1.
% eg: \tnotetext[1]{Title footnote text}
% \tnotetext[<tnote number>]{<tnote text>} 
% \tnotetext[1]{This document is the results of the research
%    project funded by the National Science Foundation.}

% \tnotetext[2]{The second title footnote which is a longer text matter
%    to fill through the whole text width and overflow into
%    another line in the footnotes area of the first page.}

% First author
%
% Options: Use if required
% eg: \author[1,3]{Author Name}[type=editor,
%       style=chinese,
%       auid=000,
%       bioid=1,
%       prefix=Sir,
%       orcid=0000-0000-0000-0000,
%       facebook=<facebook id>,
%       twitter=<twitter id>,
%       linkedin=<linkedin id>,
%       gplus=<gplus id>]
\author[1]{Yonghe Zhao}[orcid=0000-0003-2613-7526]

% Corresponding author indication
%\cormark[1]

% Footnote of the first author
%\fnmark[1]

% Email id of the first author
\ead{yhzhao21@mails.jlu.edu.cn}

% URL of the first author
%\ead[url]{www.cvr.cc, cvr@sayahna.org}

%  Credit authorship
% \credit{Conceptualization, Formal analysis, Methodology, Writing - original draft}

% Address/affiliation
\affiliation[1]{organization={School of Artificial Intelligence, Jilin University},
    addressline={Qianjin Street 2699}, 
    city={Changchun},
    % citysep={}, % Uncomment if no comma needed between city and postcode
    postcode={130012}, 
    state={Jilin},
    country={China}}

% Second author
\author[1]{Qiang Huang}[style=chinese]

% \ead{huangqiang18@mails.jlu.edu.cn}

% \credit{Methodology, Writing - review \& editing}

% Third author
\author[1]{Siwei Wu}[style=chinese]

% \ead{wusw22@mails.jlu.edu.cn}

% \credit{Data curation, Software}

% Fourth author
\author[2]{Yun Peng}[style=chinese]

% \ead{shimomo2021@gmail.com}

% \credit{Writing - review \& editing}

\affiliation[2]{organization={Department of Data Analysis, Baidu},
    addressline={Shangdi 10th Street 10}, 
    city={Beijing},
    % citysep={}, % Uncomment if no comma needed between city and postcode
    postcode={100085}, 
    % state={Jilin},
    country={China}}

\author[1]{Huiyan Sun}[style=chinese]
% [orcid=0000-0002-4664-7147]
% \cormark[1]
% \fnmark[1,3]
% \ead{huiyansun@jlu.edu.cn}
% \ead[URL]{https://www.researchgate.net/profile/Huiyan-Sun}

% \credit{Writing - review \& editing, Supervision, Validation}

% Address/affiliation
%\affiliation[2]{organization={Sayahna Foundation},
    % addressline={}, 
%    city={Jagathy},
    % citysep={}, % Uncomment if no comma needed between city and postcode
%    postcode={695014}, 
%    state={Trivandrum},
%    country={India}}

% Fourth author
%\author%
%[1,3]
%{Rishi T.}
%\cormark[2]
%\fnmark[1,3]
%\ead{rishi@stmdocs.in}
%\ead[URL]{www.stmdocs.in}

%\affiliation[3]{organization={STM Document Engineering Pvt Ltd.},
%    addressline={Mepukada}, 
%    city={Malayinkil},
    % citysep={}, % Uncomment if no comma needed between city and postcode
%    postcode={695571}, 
%    state={Trivandrum},
%    country={India}}

% Corresponding author text
% \cortext[cor1]{Corresponding author}
%\cortext[cor2]{Principal corresponding author}

% Footnote text
%\fntext[fn1]{This is the first author footnote. but is common to third
%  author as well.}
%\fntext[fn2]{Another author footnote, this is a very long footnote and
%  it should be a really long footnote. But this footnote is not yet
%  sufficiently long enough to make two lines of footnote text.}

% For a title note without a number/mark
%\nonumnote{This note has no numbers. In this work we demonstrate $a_b$
%  the formation Y\_1 of a new type of polariton on the interface
%  between a cuprous oxide slab and a polystyrene micro-sphere placed
%  on the slab.
%  }

% Here goes the abstract
\begin{abstract}
Causal inference plays a vital role in diverse domains like epidemiology, healthcare, and economics. De-confounding and counterfactual prediction in observational data has emerged as a prominent concern in causal inference research. While existing models tackle observed confounders, the presence of unobserved confounders remains a significant challenge, distorting causal inference and impacting counterfactual outcome accuracy. To address this, we propose a novel variational learning model of unobserved confounders for counterfactual inference (VLUCI), which generates the posterior distribution of unobserved confounders. VLUCI relaxes the unconfoundedness assumption often overlooked by most causal inference methods. By disentangling observed and unobserved confounders, VLUCI constructs a doubly variational inference model to approximate the distribution of unobserved confounders, which are used for inferring more accurate counterfactual outcomes. Extensive experiments on synthetic and semi-synthetic datasets demonstrate VLUCI's superior performance in inferring unobserved confounders. It is compatible with state-of-the-art counterfactual inference models, significantly improving inference accuracy at both group and individual levels. Additionally, VLUCI provides confidence intervals for counterfactual outcomes, aiding decision-making in risk-sensitive domains. We further clarify the considerations when applying VLUCI to cases where unobserved confounders don't strictly conform to our model assumptions using the public IHDP dataset as an example, highlighting the practical advantages of VLUCI.
\end{abstract}

% Use if graphical abstract is present
% \begin{graphicalabstract}
% \includegraphics{figs/grabs.pdf}
% \end{graphicalabstract}

% Research highlights
% \begin{highlights}
% \item Causality
% \item Counterfactual Inference
% \item Explanable Artificial Intelligence
% \end{highlights}

% Keywords
% Each keyword is seperated by \sep
\begin{keywords}
Counterfactual Inference \sep Unobserved Confounders \sep Variational Learning \sep Treatment Effect
\end{keywords}

\maketitle

\section{Introduction}
In recent years, causal effect inference has attracted increasing attention across various domains, including but not limited to epidemiology, healthcare, and economics\cite{NIPS2017_6a508a60,2016Causal,johansson2018learning}. Compared to correlation, causality represents a more fundamental relationship between variables, revealing the directionality and determinacy\cite{imbens2015causal}. Randomized controlled trials (RCTs) are widely regarded as an effective means of exploring causality\cite{JudeaCausality}, however the complete randomness of RCTs limits their applicability in certain scenarios\cite{2016Assessing,2011Unexpected}. How to conduct causal inference directly from collected observational data is a research topic of widespread concern.

The counterfactual inference has become an ideal standard for causal inference from observational data\cite{morgan2015counterfactuals}, especially for individual treatment effect (ITE) estimation. One of the key challenges in counterfactual inference is the influence of the confounders which represent a class of variables that affect both the treatment and outcome variables\cite{imbens2015causal}. Taking an binary medical treatment as an example, as shown in Fig.~\ref{con}(left), treatment $A$ and $B$ represent two different treatment options for a certain disease; The outcome is whether the patient survives or not; The age of the patient is a typical confounder. In practice, physicians usually select a treatment option based on the age stage and physical condition of the patient, which may result in differences in the distribution of patient's age between the treatment group of $A$ and $B$, as shown in Fig.~\ref{con}(right). This difference leads to unsatisfactory accuracy of inference about patient survival when using an inferential model $S_{A}$ fitted by treatment $A$ to infer the survival of the treatment $B$, which is similar to the domain adaptation problem\cite{2018LearningWeighted}.
\begin{figure}
\centering
\includegraphics[width=0.95\textwidth]{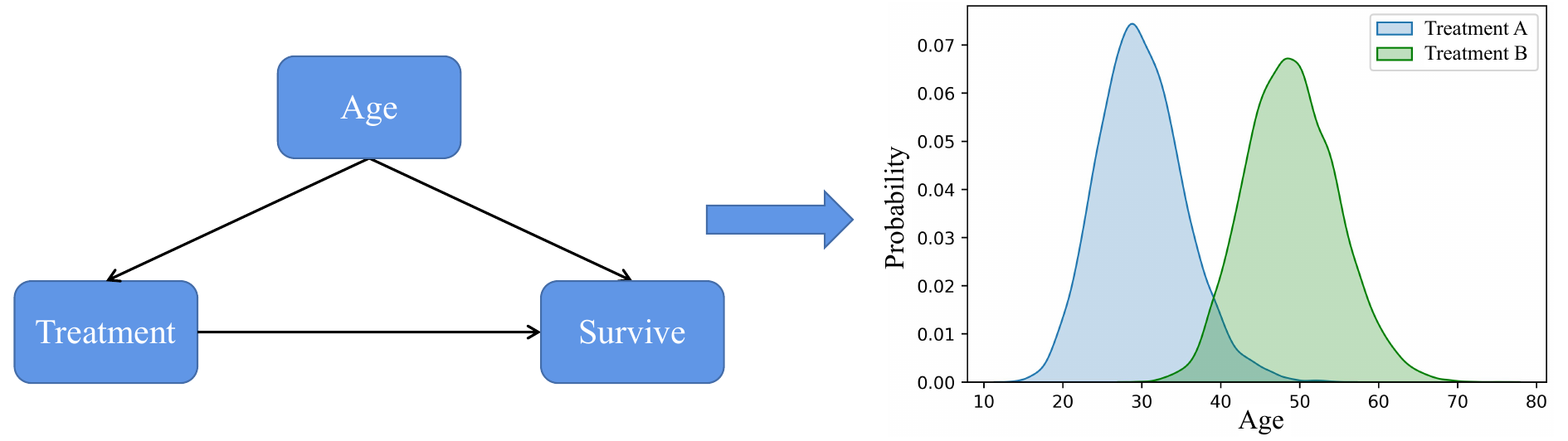}
\caption{Confounding Bias: the distributions of the confounders differed between treatment groups.}
\label{con}
\end{figure}
Various models for de-confounding of observed confounders are proposed under the unconfoundedness assumption, which refers to the absence of unobserved confounders\cite{imbens2015causal}, including reweighting\cite{rosenbaum1983central,2019Robust,lee2011weight,Austin2011An,imai2014covariate}, matching\cite{L2021Combining,2017Informative,JMLR21}, causal trees\cite{2010BART,Hill2011Bayesian,2017Estimation}, confounding balanced representation learning\cite{2018LearningWeighted,johansson2018learning,schwab2019perfect,2019AdversarialDu}, etc. Nevertheless, validating the unconfoundedness assumption can be challenging from observational data. This poses significant difficulties in implementing counterfactual inference.
%However, there may be various deviations in observational data, such as confounders, selection bias, and measurement errors, which greatly challenge counterfactual inference. To address these potential deviations, numerous methods have been proposed to minimize inference errors. Of these methods, the most widely recognized are the structural causal model (SCM) developed by Judea Pearl and the potential outcomes framework (POF) proposed by Jerzy Neyman and Donald Bruce Rubin. Access to the complete structural cause graph, which demands domain knowledge, is a prerequisite for utilizing SCM\cite{2018Thebook}. Hence, we concentrate on counterfactual inference methods in the POF that are designed specifically towards observational data. 

There are two primary strategies to get rid of the dilemma when the unconfoundedness assumption is invalid. The first involves seeking instrumental variables\cite{baiocchi2014instrumental,2020Valid}. However, the detection of suitable instrumental variables satisfying stringent independent assumptions is also a challenging task\cite{hartford2017deep,hemani2018evaluating}. The second approach entails directly quantifying unobserved confounders. Generative models, particularly Variational Autoencoder (VAE) models\cite{kingma2019introduction}, possess the ability to learn the genuine data distribution and provide probabilistic descriptions of data in latent spaces. Thus, utilizing VAE models to estimate the distribution of unobserved confounders is a viable option. However, current studies focusing on the quantification of unobserved confounders are relatively limited. The CEVAE is a well-known model that quantifies latent confounders using proxies\cite{NIPS2017Latent}, making it crucial to identify proxy variables with strong representational power. Nevertheless, it is worth noting that not all unobserved confounders can be represented by proxy variables, and that cannot be learned from covariates also need to be considered. Fig. \ref{difference_con} illustrates that CEVAE-like methods concentrate on the latent confounders denoted by $L$. However, we are concerned with the unobserved confounders denoted by $C^{u}$ in Fig. \ref{difference_con}, which are independent of the observed covariates $X$. 

In this paper, the VLUCI model is proposed to estimate the distribution of unobserved confounders $C^{u}$. Unlike CEVAE-like methods, VLUCI is designed to relax the untestable assumption of unconfoundedness by providing unobserved confounding information beyond covariates, thereby improving the accuracy of counterfactual inferences. Specifically, we consider the generative mechanisms of both treatment and outcome variables to construct an interacting doubly variational generative model. The primary contributions of this paper are as follows:
\begin{figure}[ht]
\centering
\includegraphics[width=0.5\textwidth]{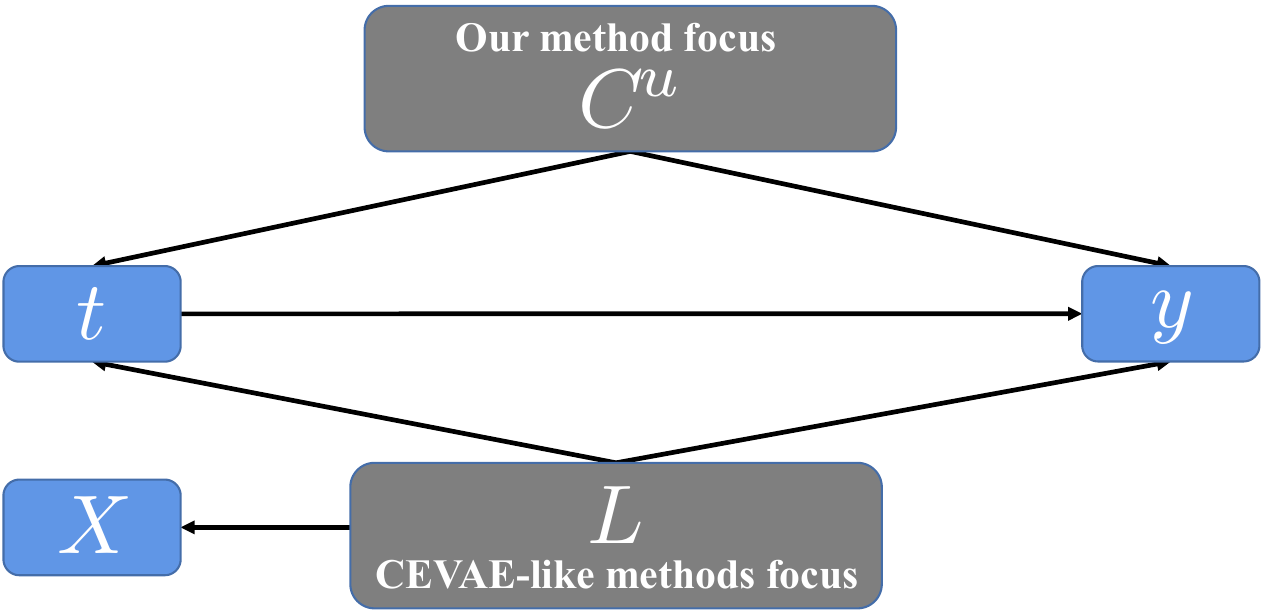}
\caption{The difference between the latent confounders $L$ emphasized by CEVAE-like methods and the unobserved confounders $C^{u}$ focused on in this paper. In the CEVAE-like methods, $X$ denote the proxy variables for $L$, whereas in this paper $X$ represent the observed confounding covariates.}
\label{difference_con}
\end{figure}
\begin{itemize}
\item By combining causal identifiability and partial regression theory, we present a theoretical proof of the unbiased estimation of the directly causal influence of covariates in both the treatment and outcome variables. Moreover, we propose a method for obtaining the treatment and outcome variables with the effects of covariates decomposed.
\item We develop a doubly variational inference model, VLUCI, to estimate the distribution of unobserved confounders that are independent of covariates, thus relaxing the untestable assumption of unconfoundedness.
\item We perform extensive experiments on synthetic datasets by combining VLUCI with existing state-of-the-art counterfactual inference methods. The results demonstrate the superior performance of VLUCI in inferring the distributions of unobserved confounders which significantly improve the counterfactual inference accuracy of existing methods.
\item Illustrated by the utilization of the public dataset IHDP, we engage in a discussion concerning the considerations and exceptional merits of applying VLUCI in scenarios where the actual tasks partially diverge from the assumptions articulated within this manuscript.
\end{itemize}
\section{Related Works}
Numerous methods for counterfactual inference have been proposed within the potential outcomes framework (POF), such as reweighting\cite{rosenbaum1983central,2019Robust,lee2011weight,Austin2011An,imai2014covariate}, stratification\cite{2015Principal,hullsiek2002propensity}, matching\cite{L2021Combining,2017Informative,JMLR21}, decision tree-based approaches (BART, C Forest)\cite{2010BART,2017Estimation}, and meta-learning\cite{K2017Meta,1988Root}, among others. These methods involve partitioning observational data with the common goal of approximating RCTs for counterfactual inference and then aggregating the results across subdomains using a weighted average.

Representation learning approaches have recently demonstrated their superiority for counterfactual inference\cite{zeyu2023causal}. These approaches differ from the methods of partitioning original data by learning deep representations of the original data\cite{B0Analysis}. By leveraging the powerful representation capability of neural networks, the representation learning-based methods obtain relatively balanced covariate representations\cite{2018LearningWeighted}. Notably, BNN\cite{johansson2018learning}, TARNET and CFR\cite{shalit2017estimating} models have been proposed for learning balanced covariate representations by minimizing the distance between the covariates of different treatment groups in the representation space. Furthermore, Schwab et al. developed the PM algorithm, a representation learning method based on data augmentation, which improves the equilibrium of training samples by finding the best matching sample within the training batch based on the propensity score\cite{schwab2019perfect}. Additionally, there are several counterfactual inference networks based on propensity score adjustment and multi-task representation learning, such as Propensity Dropout\cite{alaa2017deep} which regularizes each sample's weight based on treatment propensity, and Causal Multi-task Gaussian Processes for ITE estimation\cite{NIPS2017_6a508a60}. While the representation learning methods eliminate observed confounders and selective bias to some extent, the models depend on the untestable assumptions of unconfoundedness and sufficient overlap of covariates in observational data.

However there is limited literature on relaxing the unconfoundedness assumption, generative learning is considered to be a feasible approach. The CEVAE is a generative method for counterfactual inference that assumes the presence of proxy variables for unobserved confounders\cite{NIPS2017Latent}. These proxy variables are used to generate the joint distribution of treatment variables, unobserved confounders, and counterfactual outcomes. Yoon J. et al. introduced GANITE\cite{yoon2018ganite}, another generative method for estimating ITE. The method divides the ITE prediction model into two parts, namely counterfactual prediction and ITE estimation, both of which are learned using a GAN model. In a sense, the reconstruction error of the factual outcome transforms the GAN model into a standard auto-encoder. However, without this guidance, the GAN model would struggle to converge. Unfortunately, to the best of our knowledge, apart from CEVAE-like methods, other models require the assumption of unconfoundedness. Furthermore, the rationality of proxy variables for unobserved confounders in CEVAE-like models also need to be rigorously verified. 
\section{Method}
\subsection{Problem Setting}
\subsubsection{Symbol Description}
Referring to the  basic of causal inference in POF\cite{imbens2015causal}, the dataset, denoted as $\mathcal{D} = \{X_{i},t_{i},y_{i}\}_{i = 1}^{N}$, encompasses $N$ observational samples that are independent and identically distributed. Where, $X=\{X^{1},X^{2},\dots,X^{P}\} \in \mathbb{R}^{P}$ represents the $P$ covariates; $t \in \mathcal{T}$, $\mathcal{T}$ indicates treatment space. Each value of the treatment variable, $t_{i}$, corresponds to a potential outcome denoted by $y(t_{i})\in \mathbb{R}$. The potential outcomes can be specifically divided into the factual outcome $y^{f}$ and the counterfactual outcome $y^{cf}$. In addition to the observed $\mathcal{D}$, $C^{u}$ represents unobserved confounders in this paper.

\subsubsection{Assumptions} 
Different from the related literature which requires the satisfaction of the unconfoundedness assumption, the VLUCI relaxes this strong assumption. The SUTV and positivity assumptions that need to be satisfied in the proposed model are described below\cite{imbens2015causal}. 

The SUTV assumption includes: firstly, the potential outcome of each individual is not affected by the treatment of any other individual, in other words, individuals are independent; secondly, there is no measurement error in the factual observational outcome. 

The positivity assumption, commonly referred to as the overlap assumption, posits that each covariate can be assigned to any treatment with a non-zero probability, specifically $p(t|X=x) >0, \forall\ t \in \mathcal{T}, x \in X$. The purpose of counterfactual inference is to assess differences across treatments, and the model is meaningless if some treatments can not be observed or are not meaningful. 

In addition, based on the sample space described above, it is assumed that a corresponding causal structure, as depicted in Fig.~\ref{CausalStructure}, can provide insight into the generative mechanism behind the observational data. Naturally, due to the limitation of the observational data, the causal relationship between the treatment and the potential outcome is still influenced by unobserved confounders $C^{u}$, except for the observed covariates $X$. This causal structure emphasizes that the $C^{u}$ cannot be explained by the $X$.
\begin{figure}
\centering
\includegraphics[width=0.8\textwidth]{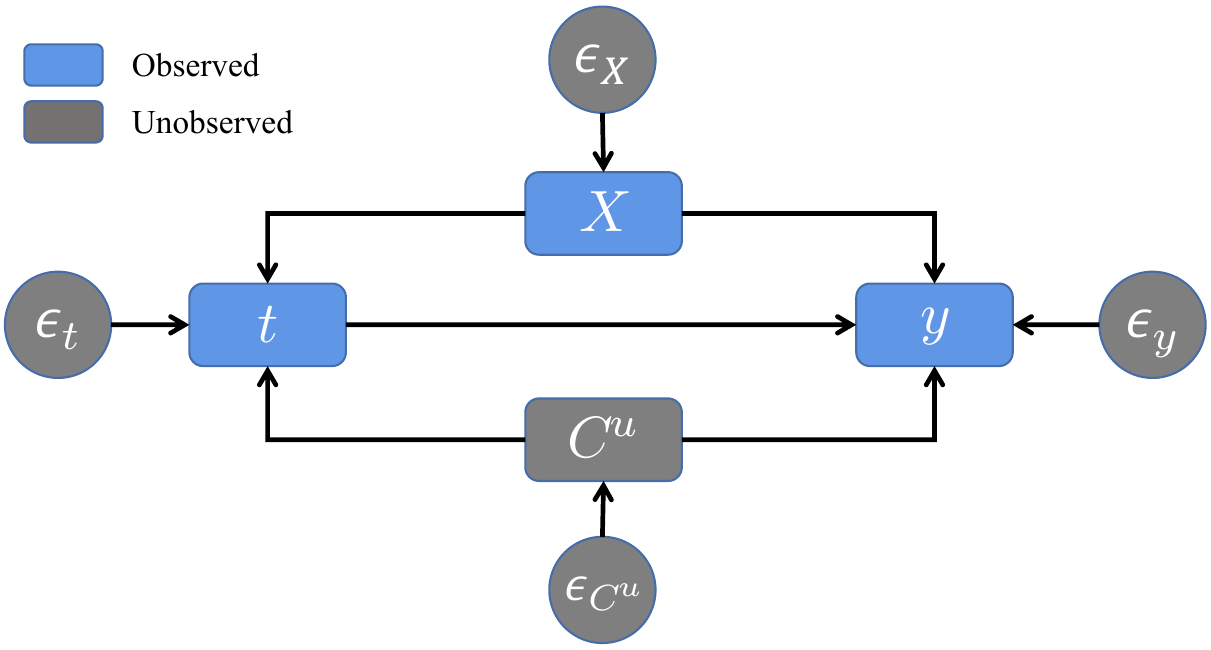}
\caption{The causal structure graph. Where $t$, $X$, and $y$ in blue squares represent observed treatment, covariables, and outcome. While $C^{u}$ and $\epsilon$ in gray squares represent unobserved confounders and random noise. } 
\label{CausalStructure}
\end{figure}

The causal structure shown in Fig.~\ref{CausalStructure} implies a latent assumption that the observed covariates $X$ and unobserved confounders $C^{u}$ are pre-treatment variables\cite{teixeira2005review,yao2020survey}, i.e., there are no causal paths from treatment $t$ to $X$ and $C^{u}$. 
\subsubsection{Key Objective} 
The structural equations correspond to Fig.~\ref{CausalStructure} are shown in Eq.~\eqref{struc_equa}. In this structure, $X$ and $C^{u}$ are exogenous variables that are entirely determined by random noise. The treatment $t$ is influenced by a combination of $X$, $C^{u}$, and $\epsilon_{t}$. Similarly, the potential outcome $y$ is influenced by a combination of $X$, $C^{u}$, $t$, and $\epsilon_{y}$. 
%Next, we discuss the identifiability and estimation methods of these structural equations. Our objective is to distinguish the causal influence of $X$ and $C^{u}$ on the variable $t$ and $y$.
\begin{equation}
\label{struc_equa}
%\tag{A.4}
\centering
\begin{aligned}
&X=f_{X}\left(\epsilon_{X}\right) \\
&C^{u}=f_{C^{u}}\left(\epsilon_{C^{u}}\right)  \\
&t=f_{t}(X,C^{u},\epsilon_{t};\theta_{X}^{t},\theta_{C^{u}}^{t})  \\
&y=f_{y}(X,C^{u},t,\epsilon_{y};\theta_{X}^{y},\theta_{C^{u}}^{y},\theta_{t}^{y})  \\
\end{aligned}
\end{equation}

In the dataset $\mathcal{D}$, for the $i$-th sample, only the factual outcome $y_{i}^{f}(t_{i})$ corresponding to the $t_{i}$ is accessible, while the other counterfactual outcomes $y_{i}^{cf}(C_{\mathcal{T}}t_{i})$ are unobserved, where $C_{\mathcal{T}}t_{i}$ represent the complement of $t_{i}$ with respect to $\mathcal{T}$. The primary task of counterfactual inference is to infer $y_{i}^{cf}(C_{\mathcal{T}}t_{i})$. Furthermore, clarifying the structural equation $f_{y}$ for $y$ in Eq.~\eqref{struc_equa} is equivalent to obtaining counterfactual outcomes $y_{i}^{cf}$. Moreover, based on the structural equation $f_{y}$, the causal effects of any variable in $f_{y}$ on $y$ can be evaluated. Therefore, the key objective of this paper is to estimate the structural equation $f_{y}$ in Eq.~\eqref{struc_equa} based on observational data.

\subsection{A Practical Path for Estimation of the Structural Equation $f_{y}$}
For convenience of proof, we first decompose $f_{y}$ in Eq.~\eqref{struc_equa}. Since the $X$ and $C^{u}$ are pre-treatment variables, there is no interaction term between $t$, $X$ and $C^{u}$ in $f_{y}$, and $f_{y}$ can be expanded as the additive term shown in Eq.~\eqref{struc_equa_pro}.
\begin{equation}
\label{struc_equa_pro}
\begin{aligned}
y&=f_{y}(X,C^{u},t,\epsilon_{y};\theta_{X}^{y},\theta_{C^{u}}^{y},\theta_{t}^{y})\\
&= f_{y}^{X}(X\theta_{X}^{y}) + f_{y}^{C^{u}}(C^{u}\theta_{C^{u}}^{y}) + f_{y}^{t}(t\theta_{t}^{y}) + \epsilon_{y}\\
\end{aligned}
\end{equation}

Judea Pearl introduced a criterion for determining the identifiability of the causal effect of a singleton variable on all other variables in causal graphs: "The causal effect of singleton variable $A$ on $B$ is identifiable if there is no unobserved confounders between $A$ and $A$’s children in the subgraph composed of the ancestors of $B$."\cite{2002Identify}. Applying this criterion, the causal effect of $t$ on $y$, cannot be identified due to the presence of unobserved confounders $C^{u}$, which is equivalent to the inability to accurately estimate $f_{y}^{t}$ in Eq.~\eqref{struc_equa_pro}. Therefore, without the instrumental variables, inferring the distribution of $C^{u}$ based on observational data is a necessary prerequisite for estimating the causal effects of the treatment. Variational inference is an effective method for inferring the distribution of unobserved variables. One of the key ideas of this paper is to use variational inference to recover the distribution of $C^{u}$ by leveraging the variables $t$ and $y$ that have causal relationships with $C^{u}$. It is worth noting that, in addition to $C^{u}$, $t$ and $y$ are still influenced by the covariate $X$. Therefore, before performing variational inference, the parts of $t$ and $y$ that are directly influenced by $X$, denoted as $t_{X}$ and $y_{X}$, should be decomposed. It is evident based on the above criterion that the directly causal effects of $X$ on $t$ and $y$ are identifiable. We will derive how to estimate the direct causal effects unbiasedly below. 

To summarize, we construct a practical path for estimation of the structural equation $f_{y}$ that contains three steps, as shown in Fig.~\ref{step}: (i) identifying the parts of $t$ and $y$ that are directly influenced by $X$ ($t_{X}$ and $y_{X}$); (ii) inferring the distribution of $C^{u}$ based on $t$ and $y$ with the influenced parts of $X$ removed ($t_{C^{u}}$ and $y_{(C^{u},t)}$); and (iii) estimating Eq.~\eqref{struc_equa_pro} based on the distribution of $C^{u}$ specified.
\begin{figure}
\centering
\includegraphics[width=1.0\textwidth]{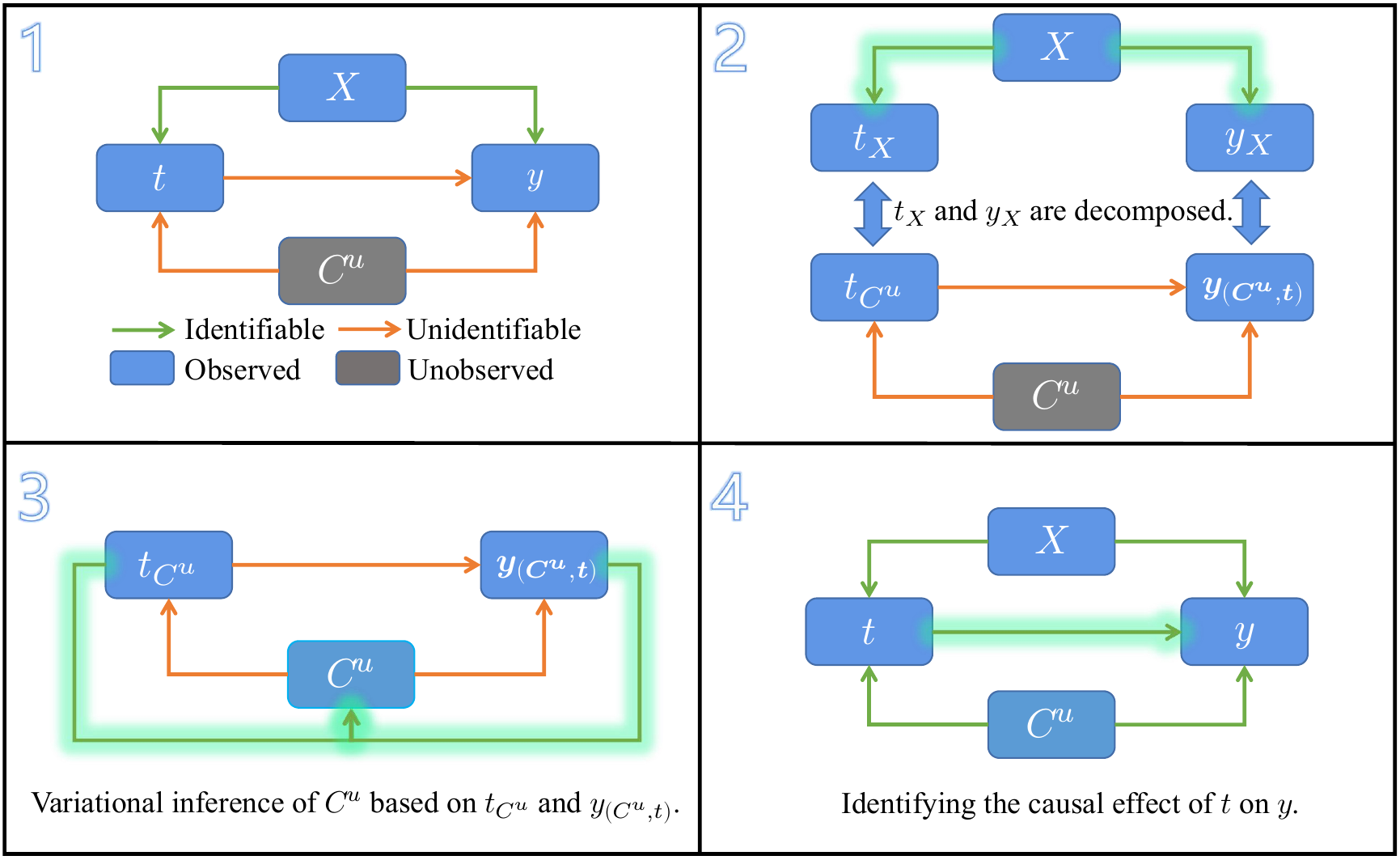}
\caption{The three steps of inference of the structural equation in Eq.~\eqref{struc_equa_pro}. } 
\label{step}
\end{figure}

\subsection{Unbiased Estimation of Directly Causal Influence $t_{X}$ and $y_{X}$ of the Observed Covariates $X$ on $t$ and $y$}
The $t_{X}$ can be directly estimated by fitting the observational data, as there are no other sources of deviation. To estimate $y_{X}$, we introduce Lemma 1, which is derived from the corresponding partial regression equation. 

%Assuming that the observable covariates $X$ and unobservable confounders $C^{u}$ are pre-treatment variables, i.e., there are no causal paths from treatment $t$ to $X$ and $C^{u}$. 
\begin{lemma}
For the causal structure shown in Fig.~\ref{CausalStructure}, similar to the decomposition in Eq.~\eqref{struc_equa_pro}, let Eq.~\eqref{partial_func} represents the fittable partial regression equation for $y$ on the observed variables $X$ and $t$. In this case, the $X$-related term $f_{\hat{y}}^{X}(X\hat{\theta}_{X}^{y})$ in Eq.~\eqref{partial_func} is unbiased estimates of the term $f_{y}^{X}(X\theta_{X}^{y})$ in Eq.~\eqref{struc_equa_pro}.
\begin{equation}
\label{partial_func}
\begin{aligned}
\hat{y} &= f_{\hat{y}}(X,t;\hat{\theta}_{X}^{y},\hat{\theta}_{t}^{y})\\
&= \underbrace{f_{\hat{y}}^{X}(X\hat{\theta}_{X}^{y})}_{X-related} + \underbrace{f_{\hat{y}}^{t}(t\hat{\theta}_{t}^{y})}_{t-related}\\
\end{aligned}
\end{equation}
\end{lemma}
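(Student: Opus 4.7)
The plan is to apply the Frisch--Waugh--Lovell (FWL) partial regression theorem to the additive fit in Eq.~\eqref{partial_func} and to exploit the structural independence $X \perp\!\!\!\perp C^{u}$ that is read off Fig.~\ref{CausalStructure}. The first step is to note that in Eq.~\eqref{struc_equa} both $X$ and $C^{u}$ are exogenous --- each determined by its own independent noise source --- so they are jointly independent, giving $\mathrm{Cov}(X,C^{u})=0$. This is the backbone of the argument, because it switches off the otherwise dangerous omitted-variable channel between the $X$-regressor and the unmodelled term $f_{y}^{C^{u}}(C^{u})$ that ends up in the residual of the partial fit.

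Next, by FWL the coefficient estimated for the $X$-block in the regression of $y$ on $(X,t)$ coincides with the coefficient from the simple regression of $\tilde{y}$ on $\tilde{X}$, where tildes denote residualization against $t$. Substituting Eq.~\eqref{struc_equa_pro} into $\tilde{y}$ leaves $\theta_{X}^{y}\tilde{X} + \theta_{C^{u}}^{y}\widetilde{C^{u}} + \tilde{\epsilon}_{y}$, since the $t$-term is annihilated by the residualization. Taking the population covariance with $\tilde{X}$ and dividing by $\mathrm{Var}(\tilde{X})$, the leading term reproduces $\theta_{X}^{y}$, the $\epsilon_{y}$ contribution drops out under the standard noise-orthogonality assumption implicit in Eq.~\eqref{struc_equa}, and the remaining potential bias is proportional to $\mathrm{Cov}(\tilde{X},\widetilde{C^{u}})$.

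The heart of the argument is then to show $\mathrm{Cov}(\tilde{X},\widetilde{C^{u}}) = 0$. I would expand $\tilde{X}=X-\Pi(X\mid t)$ and $\widetilde{C^{u}}=C^{u}-\Pi(C^{u}\mid t)$, with $\Pi$ the linear projection; use $\mathrm{Cov}(X,C^{u})=0$; and plug in the additive treatment equation $t=f_{t}^{X}(X)+f_{t}^{C^{u}}(C^{u})+\epsilon_{t}$ parallel to Eq.~\eqref{struc_equa_pro}. The two cross terms in the expanded covariance should then mirror each other and cancel, leaving zero. This gives $E[\hat{\theta}_{X}^{y}] = \theta_{X}^{y}$, so that $f_{\hat{y}}^{X}(X\hat{\theta}_{X}^{y})$ is an unbiased estimate of $f_{y}^{X}(X\theta_{X}^{y})$, as claimed. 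The identifiability criterion of \cite{2002Identify} invoked in the paragraph above then plays a complementary role, certifying that the target functional is well-posed before any estimator is proposed.

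The step I expect to be the main obstacle is exactly this cancellation. The collider $t$ induces a conditional dependence between $X$ and $C^{u}$ even though they are marginally independent, so the argument relies crucially on the additive/linear form postulated in Eqs.~\eqref{struc_equa_pro} and \eqref{partial_func} and on matching additive separability in the treatment equation. I would state the required linearity and noise-orthogonality hypotheses explicitly at the outset, verify that the induced cross-covariances cancel term-by-term, and flag that in the absence of such structure the lemma would only hold as an approximation in which case the unbiasedness claim would need to be qualified accordingly.
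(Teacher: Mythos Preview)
Your FWL route is a different packaging from the paper's. The paper instead writes a putative additive relation $C^{u}=f_{C^{u}}^{X}(X\theta_{X}^{C^{u}})+f_{C^{u}}^{t}(t\theta_{t}^{C^{u}})$, asserts $\theta_{X}^{C^{u}}=\mathbf{0}$ ``as $C^{u}$ and $X$ are independent'', substitutes $C^{u}=f_{C^{u}}^{t}(t\theta_{t}^{C^{u}})$ back into Eq.~\eqref{struc_equa_pro}, and then matches the $X$-related and $t$-related blocks of Eq.~\eqref{partial_func} against the resulting expression via the law of large numbers. So the paper's entire argument hangs on the single step you already singled out as the main obstacle: that marginal independence $X\perp C^{u}$ forces the $X$-coefficient in a regression of $C^{u}$ on $(X,t)$ to vanish, which is precisely the collider issue you flagged.

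Your own proposed resolution of that step, however, does not go through. With $\Pi(X\mid t)=c_{1}t$, $\Pi(C^{u}\mid t)=c_{2}t$ and $c_{j}=\mathrm{Cov}(\cdot,t)/\mathrm{Var}(t)$, a direct expansion gives
\[
\mathrm{Cov}\bigl(\tilde X,\widetilde{C^{u}}\bigr)
= -c_{2}\,\mathrm{Cov}(X,t)-c_{1}\,\mathrm{Cov}(C^{u},t)+c_{1}c_{2}\,\mathrm{Var}(t)
= -\,\frac{\mathrm{Cov}(X,t)\,\mathrm{Cov}(C^{u},t)}{\mathrm{Var}(t)},
\]
which is nonzero whenever both $X$ and $C^{u}$ enter the treatment equation with nonzero weight. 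The three cross terms do not ``mirror and cancel''; only two of them do, and the survivor is exactly the collider-induced bias you anticipated. By FWL this nonvanishing is equivalent to $\theta_{X}^{C^{u}}\neq\mathbf{0}$ in the paper's Eq.~\eqref{func_c_x}, so the paper's one-line justification is the very same gap viewed from the other side. If your goal is to reproduce the paper's proof, simply mirror its substitution-and-matching argument; if your goal is to actually close the gap, you will need an additional structural hypothesis (for example, that one of $X$ or $C^{u}$ is absent from $f_{t}$, or some orthogonality beyond $X\perp C^{u}$) that neither the paper nor your proposal supplies.
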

\begin{proof}
Let Eq.~\eqref{func_c_x} represents the functional relationship between unobserved $C^{u}$ and $X$ and $t$. 
\begin{equation}
\label{func_c_x}
\begin{aligned}
C^{u} &= f_{C^{u}}(X,t;\theta_{X}^{C^{u}},\theta_{t}^{C^{u}})\\
&= f_{C^{u}}^{X}(X\theta_{X}^{C^{u}}) + f_{C^{u}}^{t}(t\theta_{t}^{C^{u}})\ (\theta_{X}^{C^{u}} = \mathbf{0})\\
&= f_{C^{u}}^{t}(t\theta_{t}^{C^{u}})\\
\end{aligned}
\end{equation}
As $C^{u}$ and $X$ are independent, the parameters $\theta_{X}^{C^{u}}$ in Eq.~\eqref{func_c_x} are equal to $\mathbf{0}$. Next, substituting Eq.~\eqref{func_c_x} into Eq.~\eqref{struc_equa_pro} to eliminate the $C^{u}$ and obtaining Eq.~\eqref{y_eli_c}.
\begin{equation}
\label{y_eli_c}
\begin{aligned}
y&=f_{y}(X,f_{C^{u}}^{t}(t\theta_{t}^{C^{u}}),t,\epsilon_{y};\theta_{X}^{y},\theta_{C^{u}}^{y},\theta_{t}^{y})\\
&= \underbrace{f_{y}^{X}(X\theta_{X}^{y})}_{X-related} + \underbrace{f_{y}^{C^{u}}(f_{C^{u}}^{t}(t\theta_{t}^{C^{u}})\cdot\theta_{C^{u}}^{y}) + f_{y}^{t}(t\theta_{t}^{y})}_{t-related} + \epsilon_{y}\\
\end{aligned}
\end{equation}
Combining the $f_{y}$ in Eq.~\eqref{y_eli_c} and estimated $f_{\hat{y}}$ in Eq.\eqref{partial_func} by grouping like terms ($X$-related and $t$-related terms), according to the law of large numbers\cite{revesz2014laws}, leads to Eq.~\eqref{x_related}, thereby establishing the validity of Lemma 1.
\begin{equation}
\label{x_related}
%\tag{A.4}
\centering
\begin{aligned}
&\lim\limits_{N\to+\infty} \mathbb{E}(f_{\hat{y}}^{X}(X\hat{\theta}_{X}^{y})) = f_{y}^{X}(X\theta_{X}^{y}),\ \forall\ X \in \mathbb{R}^{P}\\
&\lim\limits_{N\to+\infty} \mathbb{E}(f_{\hat{y}}^{t}(t\hat{\theta}_{t}^{y})) = f_{y}^{C^{u}}(f_{C^{u}}^{t}(t\theta_{t}^{C^{u}})\cdot\theta_{C^{u}}^{y}) + f_{y}^{t}(t\theta_{t}^{y}),\ \forall\ t \in \mathcal{T}\\
\end{aligned}
\end{equation}
\end{proof}
A intuitive explanation of Lemma 1: Since $X$ and $C^{u}$ are independent in Figure~\ref{CausalStructure}, the absence of $C^{u}$ does not affect the estimation of direct causal effect of $X$ on $y$; Given that $C^{u}$ is confounders of $t$ and $y$, the causal effect of $C^{u}$ on $y$ will be reflected in the fitted $t$-related term $f_{\hat{y}}^{t}(t\hat{\theta}_{t}^{y})$ in Eq.~\eqref{partial_func}.

Based on Lemma 1 and the above discussion, we can unbiasedly identify and decompose the direct causal influence $t_{X}$ and $y_{X}$ from the observational data. Furthermore, we propose an interacting doubly variational generative model in the subsequent section to infer the distribution of unobserved confounders $C^{u}$.
\subsection{VLUCI}
\subsubsection{Overview Framework} 
The process of inferring unobserved confounders $C^{u}$ from the dataset $\{X,t,y\}$ can be viewed as a generative process. We construct the VLUCI model based on this fundamental concept, utilizing the latent causal structure illustrated in Fig.~\ref{CausalStructure}, as well as the unbiased segmentation of the causal effect for observed covariates and unobserved confounders. An overview of the model framework is presented in Fig.~\ref{Architecture}.
\begin{figure}[h]
\centering
\includegraphics[width=1.0\textwidth]{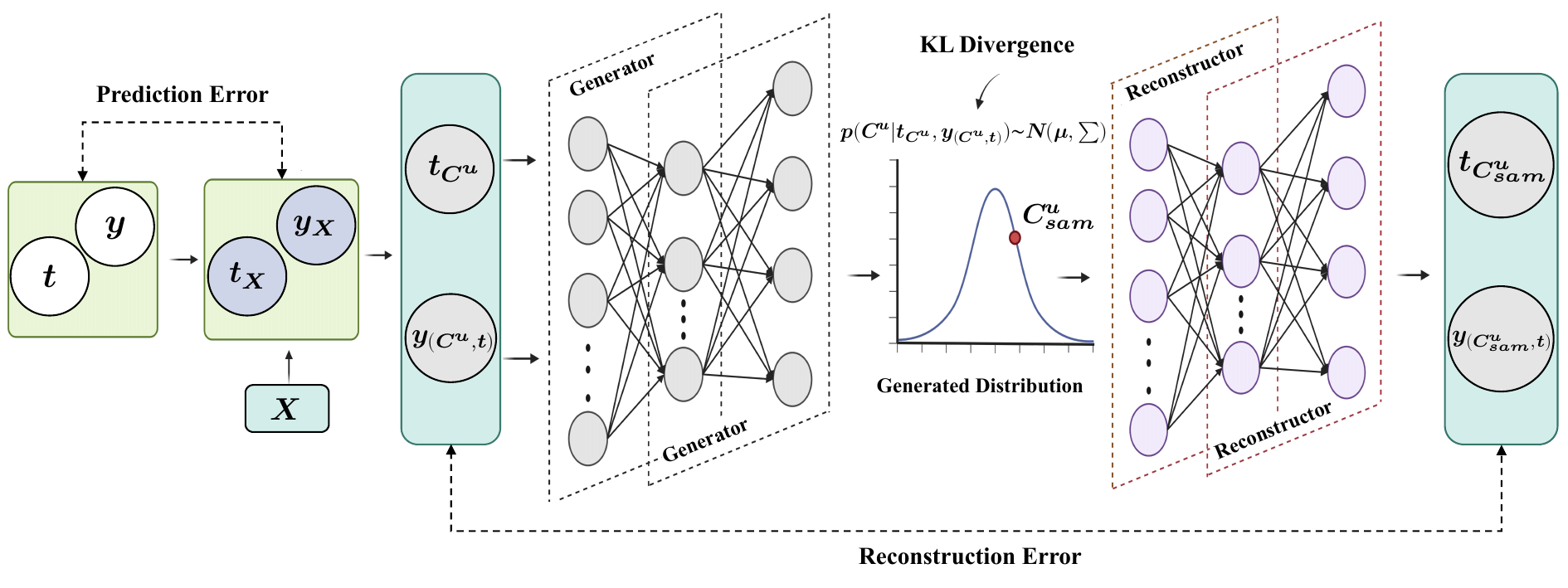}
\caption{The VLUCI model is comprised of three key components, namely, Prediction, Generator, and Reconstructor networks, which collectively form the overview framework of the model.} 
\label{Architecture}
\end{figure}

The model framework comprises three successive components. The first component is the prediction networks for estimating the direct causal influence $t_{X}$ and $y_{X}$ of $X$ on $t$ and $y$. This involves fitting the partial regression equation of $t$ and $y$, as shown in Eq.~\eqref{pred_net}.
\begin{equation}
\label{pred_net}
\hat{t} = f_{\hat{t}}(X;\theta_{f_{\hat{t}}});\ \hat{y} = f_{\hat{y}}(X,t;\theta_{f_{\hat{y}}})
\end{equation}
This component serves two key purposes. Firstly, the prediction networks function as a covariate feature filter by excluding unconfounding variables that are unrelated to $t$ and $y$ during the training process. Secondly, using Lemma 1, we are able to derive the $t$ and $y$ stripped of the causal influence of covariates $X$, as shown in Eq.~\eqref{stripped}. This effectively disentangles observed covariates from unobserved confounders.
\begin{equation}
\label{stripped}
\centering
\begin{aligned}
&\acute{t} = t_{C^{u}} = t - f_{\hat{t}}(X;\theta_{f_{\hat{t}}})\\
&\acute{y} = y_{(C^{u},t)} = y - f_{\hat{y}}(X,t = 0;\theta_{f_{\hat{y}}})\\
\end{aligned}
\end{equation}
Note that there is an operation of assigning $t$ to 0 in the derivation of $\acute{y}$ ($y_{(C^{u},t)}$) in Eq.~\eqref{stripped}. The reason for this operation is: For the estimated $\hat{y}$, the $X$-related term is unbiased and we only want to remove this part. In contrast, although the estimation of the $t$-related term is biased, we assign $t$ to 0 to retain this part of information.

The second component of the model framework is a variational generator network based on $\acute{t}$ and $\acute{y}$ for generating unobserved confounders which are assumed to subject to a multivariate Gaussian distribution $p(C^{u}|\acute{t},\acute{y}) \sim N(\mu,\Sigma)$\cite{doersch2016tutorial}. Upon completion of the second stage, the distribution of unobserved confounders is inferred. The final component is the reconstructor networks that utilize a reparameterization technique to reconstruct $\acute{t}$ and $\acute{y}$ based on the generated distribution $p(C^{u}|\acute{t},\acute{y})$.

The proposed VLUCI is designed to reflect the causal structure depicted in Fig.~\ref{CausalStructure}. Specifically, the model is cascaded in the forward propagation process, and interdependent in the backward propagation process. Within the model framework, a doubly reconstruction process of treatment and outcome variables interacts to infer the distribution of $C^{u}$.
\subsubsection{Optimization Objectives} 
The VLUCI model generates the fitted results $\hat{t}$ and $\hat{y}$ on $X$ in the prediction networks. Two corresponding prediction errors based on cross-entropy (CE) loss and mean square error (MSE), as shown in Eq.~\eqref{equ6_1}-\eqref{equ6_2}, are adopted. If the treatment variables are continuous, the loss $\mathcal{L}^{t}_{X}$ can be easily transformed into the MSE function.
\begin{equation}
\label{equ6_1}
\mathcal{L}^{t}_{X} = CE(t,\hat{t})= \frac{1}{N}\sum_{i=1}^{N}-[t_{i}log(\hat{t_{i}})+(1-t_{i})log(1-\hat{t_{i}})]
\end{equation}
\begin{equation}
\label{equ6_2}
\mathcal{L}^{y}_{X} = MSE(y,\hat{y})=\frac{1}{N}\sum_{i=1}^{N}(y_{i}-\hat{y}_{i})^{2}
\end{equation}

After fitting the prediction networks, we compute the subcomponents of $t$ and $y$ disentangled with $X$: $\acute{t}$ and $\acute{y}$, as shown in Eq.~\eqref{stripped}. The remainder of the model is a doubly variational learning with the target of inferring the distribution $p(C^{u}|\acute{t},\acute{y})$ of $C^{u}$. Since $C^{u}$ is unobserved, the distribution cannot be directly computed using the Bayesian posterior formula. Instead, we utilize the generator network to learn the posterior distribution $q(C^{u}|\acute{t},\acute{y})$ for approximating $p(C^{u}|\acute{t},\acute{y})$ by minimizing the Kullback-Leibler (KL) divergence between the two distributions. The KL divergence measures the distance between the two distributions and is defined in Eq.~\eqref{equ8}\cite{johnson2001symmetrizing}.
\begin{equation}
\label{equ8}
KL(q(C^{u}|\acute{t},\acute{y})||p(C^{u}|\acute{t},\acute{y})) = \sum q(C^{u}|\acute{t},\acute{y})log\frac{q(C^{u}|\acute{t},\acute{y})}{p(C^{u}|\acute{t},\acute{y})}
\end{equation}

However, there remains the unknown distribution $p(C^{u}|\acute{t},\acute{y})$ in Eq.~\eqref{equ8}, which is not optimizable. Referring to the methodology in \cite{2013arXiv13126114K}, the KL divergence in Eq.~\eqref{equ8} is approximated by decomposing it into three objective functions that can be learned through the network, as shown in Eq.~\eqref{equ9}-\eqref{equ11}. The detailed derivation of the decomposition is shown in Appendix A.
\begin{equation}
\label{equ9}
\mathcal{L}(q(C^{u}|\acute{t},\acute{y}),p(C^{u})) = KL(N(\mu,\Sigma)||N(0,I)))
\end{equation}
\begin{equation}
\label{equ10}
\mathcal{L}^{\acute{t}}_{C^{u}} = MSE(\acute{t},\acute{t}(C^{u}))
\end{equation}
\begin{equation}
\label{equ11}
\mathcal{L}^{\acute{y}}_{C^{u},t} = MSE(\acute{y},\acute{y}(C^{u},t))
\end{equation}

In summary, the five objective functions described in Eq.~\eqref{equ6_1}-\eqref{equ6_2} and Eq.~\eqref{equ9}-\eqref{equ11} are optimized to generate the distribution of the unobserved confounders. The specific algorithmic procedure for the VLUCI is outlined in Algorithm~\ref{alg:Framwork}.
\begin{algorithm}
\caption{The algorithm procedure of VLUCI.}\label{alg:Framwork}
\begin{algorithmic}
\Require \\
$\mathcal{D} = \{X^{i},t^{i},y^{i}\}_{i = 1}^{N}$; Training epochs: $M$; Batchsize: $B$;\\
Prediction networks: $f_{\hat{t}}(X;\theta_{f_{\hat{t}}})$ and $f_{\hat{y}}(X,t;\theta_{f_{\hat{y}}})$; \\
Generator network: $P(C^{u}|\acute{t},\acute{y}) \sim N(\mu(\acute{t},\acute{y};\theta_{\mu}),\Sigma(\acute{t},\acute{y};\theta_{\Sigma}))$; \\
Reconstructor networks: $f_{R-t}(C^{u};\theta_{f_{R-t}})$ and $f_{R-y}(C^{u},t;\theta_{f_{R-y}})$.
\Ensure 
\State Initialize Adam optimizer and all the parameters: $\theta_{f_{\hat{t}}}$, $\theta_{f_{\hat{y}}}$, $\theta_{\mu}$, $\theta_{\Sigma}$, $\theta_{f_{R-t}}$, $\theta_{f_{R-y}}$;
\For{$i \in [1,M]$}
    \State Randomly sample $B$ examples from $\mathcal{D}$;
    \State Update the parameters  $\theta_{f_{\hat{t}}}$ and $\theta_{f_{\hat{y}}}$ of the prediction networks by minimizing $\mathcal{L}^{t}_{X}$ and $\mathcal{L}^{y}_{X}$ shown as Eq.~\eqref{equ6_1}-\eqref{equ6_2};
    \State Update the parameters $\theta_{\mu}$ and $\theta_{\Sigma}$ of the generation network by minimizing Eq.~\eqref{equ9};
    \State Update the parameters $\theta_{f_{R-t}}$ and $\theta_{f_{R-y}}$ of the reconstruction networks by minimizing $\mathcal{L}^{\acute{t}}_{C^{u}}$ and $\mathcal{L}^{\acute{y}}_{C^{u},t}$ shown as Eq.~\eqref{equ10}-\eqref{equ11};
\EndFor
\State Result in well-trained $\theta_{f_{\hat{t}}}$, $\theta_{f_{\hat{y}}}$, $\theta_{\mu}$, $\theta_{\Sigma}$, $\theta_{f_{R-t}}$, $\theta_{f_{R-y}}$.
\end{algorithmic}
\end{algorithm}

\section{Experiments}
\subsection{Datasets}
In practice, researchers only have access to the factual outcome of a given treatment using real-world datasets, while other counterfactual outcomes are missing. In general, some synthetic or semi-synthetic datasets are applied in relevant literature\cite{Hill2011Bayesian,dehejia1999causal,NIPS2017Latent,johansson2018learning}. In this paper, we use two benchmark datasets to validate the performance of the proposed model with various state-of-the-art methods, including the synthetic dataset and the semi-synthetic dataset: Infant Health and Development Program (IHDP). 
\subsubsection{Synthetic Dataset} 
The pure causal effects of covariates and the distribution of unobserved confounders in a real-world dataset are inherently inaccessible. Therefore, a synthetic dataset is generated to validate Lemma 1 and to assess the performance of the proposed VLUCI in generating the distribution of unobserved confounders.

Based on the causal schema in Fig.~\ref{CausalStructure}, the synthetic dataset $\mathcal{D}_{synt}$ comprises of four components: $\mathcal{D}_{synt}= \{t,X,C^{u},y\}$. Firstly $C^{u}$ and $X$ are randomly generated as shown in Eq.~\eqref{equ17}-\eqref{equ18}, which are exogenous variables. For the convenience of illustration, $C^{u}$ is set as one-dimensional and $X$ is set as d-dimensional, where $d = 8$ in the experiment.
\begin{equation}
\label{equ17}
C^{u} \sim N(0,1)
\end{equation}
\begin{equation}
\label{equ18}
X \sim N(0^{1\times d},\frac{1}{2}(\Sigma + \Sigma^{T})), \Sigma \sim U((-1,1)^{d\times d})
\end{equation}

Subsequently, $t$ is randomly generated from a Bernoulli distribution with probabilities determined by $X$ and $C^{u}$, as shown in Eq.~\eqref{equ19}. Next, $y$ is generated by the Normal distribution shown in Eq.\eqref{equ20}. Where, $f(\cdot) = sigmoid(\cdot)$, $W_{\cdot \cdot} \sim U(1,4)$, $\epsilon_{t} \sim N(0,1)$ and $\epsilon_{y} \sim N(0,0.2)$. 
\begin{equation}
\label{equ19}
t\sim B(p(t=1));\ p(t=1) = 0.35\cdot f(C^{u}W_{C^{u}t}) + 0.6\cdot f(XW_{Xt}) + 0.05\cdot\epsilon_{t}
\end{equation}
\begin{equation}
\label{equ20}
y \sim N(\mu_{y},1);\ \mu_{y}= f(C^{u}W_{C^{u}y}) + f(tW_{ty}) + f(XW_{Xy}) +\epsilon_{y}
\end{equation}
\subsubsection{Semi-synthetic Dataset: IHDP} 
The IHDP is a RCT started in 1985 targeting low birth weight preterm infants, providing intensive high-quality child care and home visits for the treatment group\cite{brooks1992effects}. The covariates in the IHDP dataset are 6 continuous covariates and 19 binary covariates, including birth weight, head circumference, weeks born preterm, neonatal health index, sex, etc. The treatment variable is home visits with specialists, and the outcome variable is children's cognitive test scores. To obtain biased observational data, the treated assignment was then 'de-randomized' by removing children whose mother is non-white from the treatment group\cite{Hill2011Bayesian}. As a result, the treatment and control groups are no longer balanced and simple comparisons of outcomes lead to biased estimates of treatment effects.

\subsection{Metrics} 
To illustrate the improvement effect of VLUCI on various counterfactual inference models, the errors of individual treatment effect (ITE) and average treatment effect (ATE) are used to assess inferential accuracy at the individual and group levels, respectively, as shown in Eq.~\eqref{ite}-\eqref{ate}. 
\begin{equation}
\label{ite}
ITE_{i} = y^{i}(1)-y^{i}(0)
\end{equation}
\begin{equation}
\label{ate}
ATE = \mathbb{E}[y(1)-y(0)]
\end{equation}

The evaluation metric precision in the estimation of heterogeneous effects (PEHE) for the performance of ITE estimates\cite{Hill2011Bayesian}, $\epsilon_{PEHE}$, as shown in Eq.~\eqref{equ211}, is computed by determining the mean square error between the predicted and actual ITE.
\begin{equation}
\label{equ211}
\epsilon_{PEHE} = \frac{1}{N}\sum_{i =1}^{N}([y^{i}(1)-y^{i}(0)]-[\hat{y}^{i}(1)-\hat{y}^{i}(0)])^{2}
\end{equation}
Where $y^{i}$ represents the ground-truth outcome, and $y^{i}(1)-y^{i}(0)$ reflects the actual value of the ITE. On the other hand, $\hat{y}^{i}$ denotes the predicted outcome, and $\hat{y}^{i}(1)-\hat{y}^{i}(0)$ represents the predicted value of the ITE.

In addition, $\epsilon_{ATE}$, absolute error of ATE is used to measure model performance at the group level, as shown in Eq.~\eqref{equ222}.
\begin{equation}
\label{equ222}
\epsilon_{ATE} = |ATE - \hat{ATE}|
\end{equation}
\subsection{Experimental Design}
In this section, we conduct experiments to demonstrate the effectiveness of the proposed model. Specifically, we aim to answer the following research questions:
\begin{itemize}
\item RQ1: Can the proposed model estimate the direct causal effects of $X$ on $t$ and $y$ unbiasedly? That is, is Lemma 1 experimentally plausible?
\item RQ2: Are the deduced unobserved confounders $\hat{C}^{u}$ consistent with the ground-truth $C^{u}$?
\item RQ3: To what extent can the inferred distribution of $C^{u}$ improve the accuracy of various state-of-the-art counterfactual inference models?
\item RQ4: What inferences about $C^{u}$ will the model output when the situation does not conform to the causal generation mechanism shown in Fig.~\ref{CausalStructure}?
\end{itemize}

For RQ1 and RQ2, the pure causal effects of $X$ on $t$ and $y$, $t_X = 0.6\cdot f(C^{u}W_{C^{u}t})$ and $y_X = f(C^{u}W_{C^{u}y})$, and $C^{u}$ as ground-truth is not used for training. The unbiased estimation and inferential performance of the proposed model is verified by comparing the predicted $\hat{t}_X$, $\hat{y}_X$ and $\hat{C}^{u}$ of the model with the corresponding ground-truth values. 

For RQ3, we distinguish whether to add learned $\hat{C}^{u}$ to report the inferred performance of various state-of-the-art counterfactual inference models on synthetic dataset, including inverse probability weighting (IPW)\cite{rosenbaum1983central}, causal forests (C Forest)\cite{2017Estimation}, treatment-agnostic representation network (TARNET)\cite{shalit2017estimating}, counterfactual regression (CFR)\cite{shalit2017estimating}, causal effect variational autoencoder (CEVAE)\cite{NIPS2017Latent}, and generative adversarial nets for inference of ITE (GANITE)\cite{yoon2018ganite}. The synthetic dataset is randomly divided into training/test sets according to the percentages of 80/20. Under this division criterion, the models to be evaluated are repeated 100 times to record the mean and standard error of the evaluation metrics on both the training and test set. To ensure a fair comparison of the comparison results, the hyperparameters of each model remain consistent regardless of the addition of the learned $\hat{C}^{u}$ to participate in the training.

For RQ4, the proposed model is applied to the semi-synthetic dataset IHDP to analyse inferred results in cases of inconsistency with hypothetical scenarios.
\subsection{Results}
\subsubsection{Unbiased estimations of the direct causal effects of $X$ on $t$ and $y$ (RQ1).}
Refer to the model framework shown in Fig.~\ref{Architecture}, we first fit prediction networks of covariates $X$ on $t$ and $y$ in Eq.~\eqref{pred_net}, respectively. Then, $\hat{t}_X = \hat{t}(X)$ and $\hat{y}_X = \hat{y}(X, t = 0)$ are treated as estimated causal effects for comparison with the actual ones. Fig.~\ref{fig:x_t_yCase1} and Fig.~\ref{fig:x_t_yCase2} show the comparison of the predicted and ground-truth causal effects of $X$ on $t$ and $y$, respectively. It is obvious that approximately unbiased causal effect estimations of $X$ are derived, namely, the correctness of Lemma 1 is experimentally demonstrated.
\begin{figure}[h]
	\centering
    \begin{subfigure}[t]{0.49\textwidth}
           \centering
           \includegraphics[width=\textwidth]{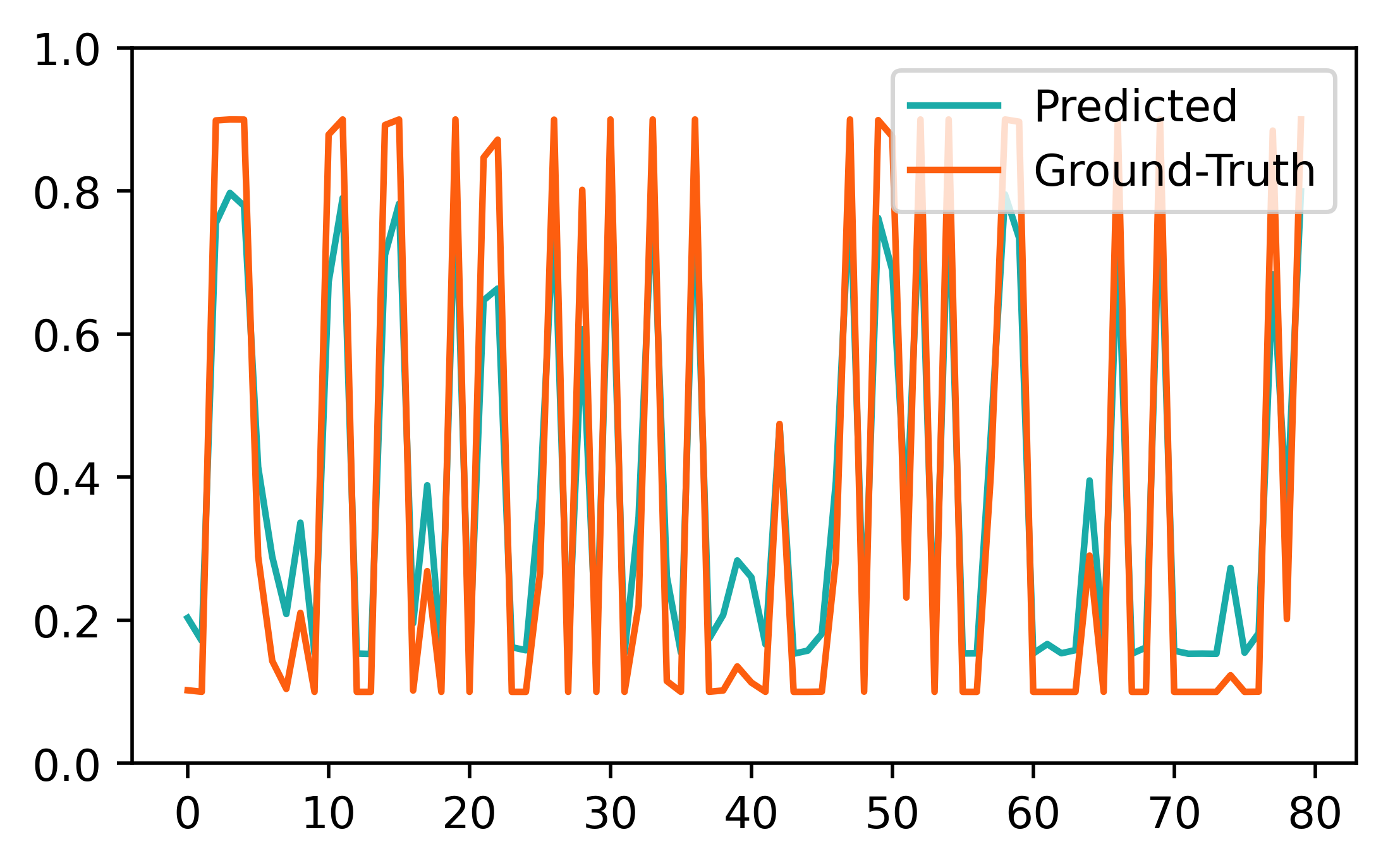}
            \caption{The direct causal effects of $X$ on $t$.}
            \label{fig:x_t_yCase1}
    \end{subfigure}
    \begin{subfigure}[t]{0.49\textwidth}
            \centering
            \includegraphics[width=\textwidth]{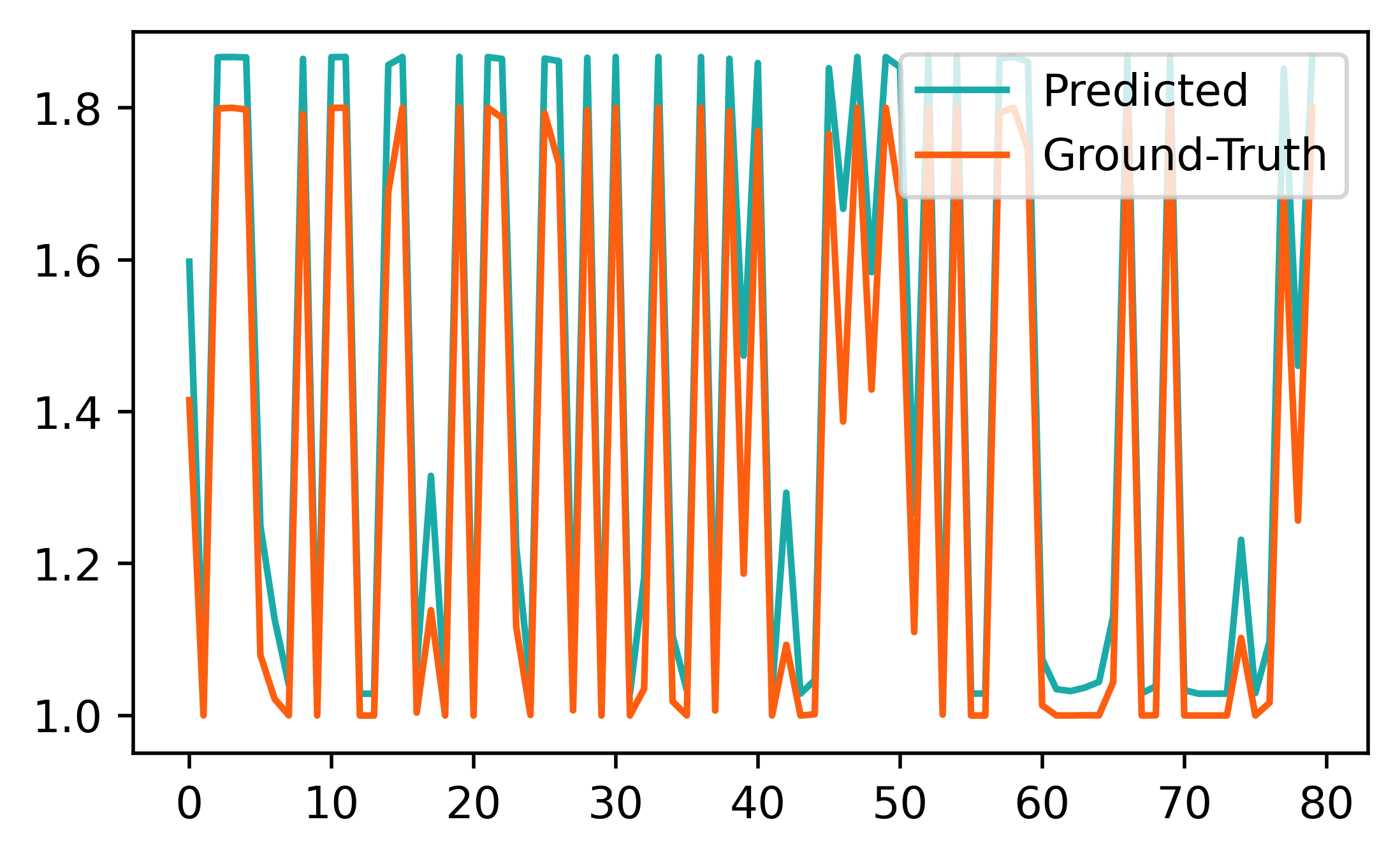}
            \caption{The direct causal effects of $X$ on $y$.}
            \label{fig:x_t_yCase2}
    \end{subfigure}
    \caption{The predicted and ground-truth direct causal effects of $X$ on $t$ and $y$.}
    \label{FIG:x_t_y}
\end{figure}
\subsubsection{Inference of unobserved confounders $C^{u}$ (RQ2).} 
Based on the estimated effects $\hat{t}_X$ and $\hat{y}_X$ of $X$, we obtain $t_{C^{u}}$ and $y_{(C^{u},t)}$ driven by $C^{u}$ shown in Eq.~\eqref{stripped}, which are used to infer the distribution of $C^{u}$. The estimation of the distribution of $C^{u}$ is provided in Fig.~\ref{UnconfundingInference}. The results illustrate that the distribution of $C^{u}$ predicted by the proposed model coincides with the unobserved confounders which are not involved in the training process. 
\begin{figure}[h]
\centering
\includegraphics[width=0.8\textwidth]{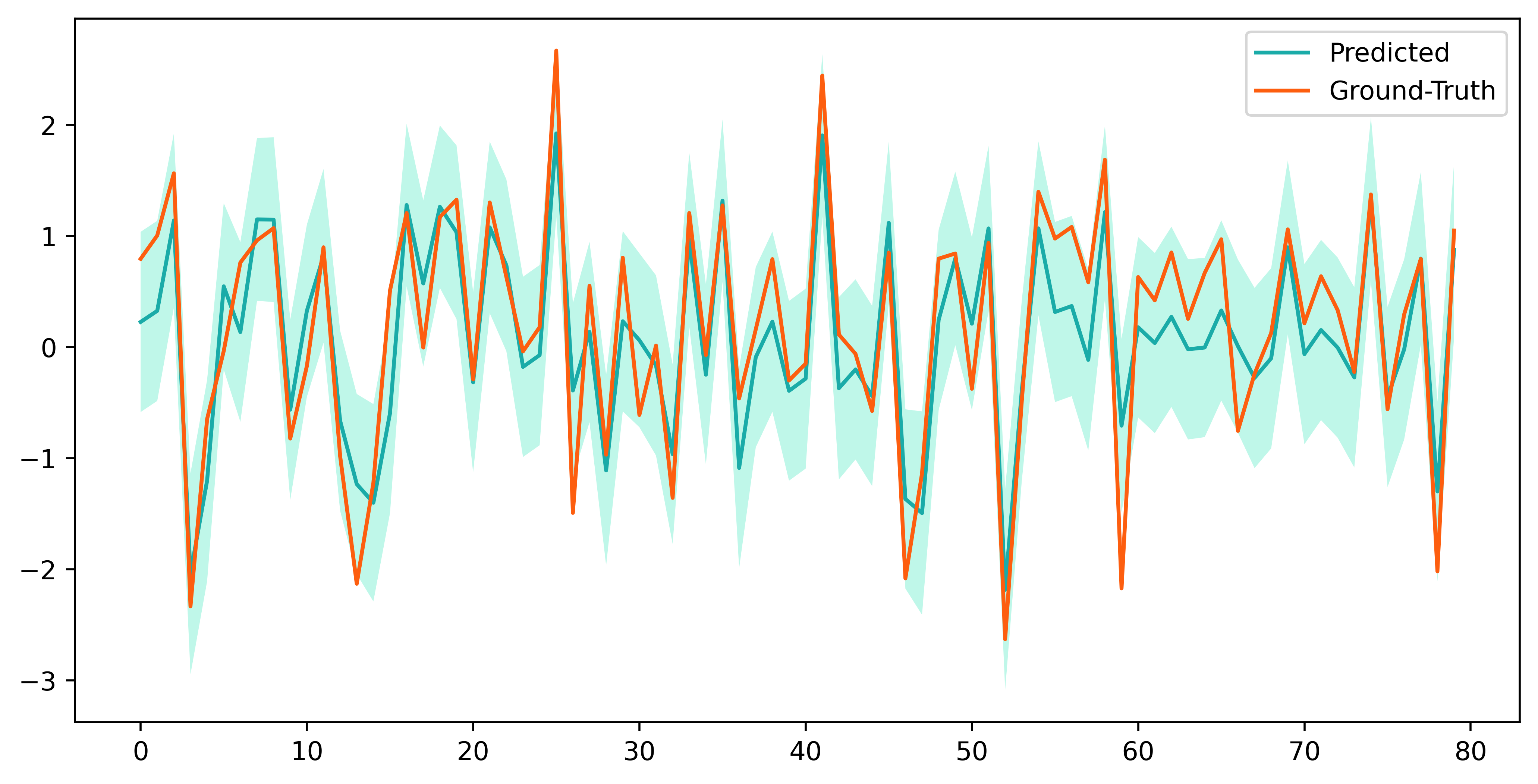}
\caption{The predicted distribution for the unobserved confounders of VLUCI, and the corresponding ground-truth values in $\mathcal{D}_{synt}$. Where the shading represents the $3\sigma$ confidence interval.} 
\label{UnconfundingInference}
\end{figure}
\subsubsection{The effectiveness of learned $\hat{C}^{u}$ on the improvement of various models (RQ3).}
To demonstrate the improvement of counterfactual inference via $\hat{C}^{u}$ learned by VLUCI, we independently train various prevalent models with identical hyperparameters on both the raw data $\{X_{i},t_{i},y_{i}\}_{i = 1}^{N}$ and the augmented data $\{X_{i},\hat{C}^{u}_{i},t_{i},y_{i}\}_{i = 1}^{N}$. The inferential results of these models are subsequently reported, as depicted in Table~\ref{ATE-PEHE}. It is clear that adding $\hat{C}^{u}$ to the training process significantly improve the prediction accuracy of various counterfactual inference methods at both individual and group levels.
\begin{table}[ht]
\centering
\caption{Performance of $\sqrt{\epsilon_{PEHE}}$ and $\epsilon_{ATE}$ estimation on synthetic datasets. Where $+\hat{C}^{u}$ represents the addition of the learned $\hat{C}^{u}$ to train counterfactual inference model.}
\resizebox{\linewidth}{!}{
\begin{tabular}{|l|cc|cc|cc|cc|}
\hline
\multirow{3}{*}{{\bfseries Methods}} &  \multicolumn{8}{|c|}{{\bfseries Metrics}}\\
\cline{2-9}
& \multicolumn{4}{|c|}{\bm{$\sqrt{\epsilon_{PEHE}}$}} & \multicolumn{4}{|c|}{\bm{$\epsilon_{ATE}$}} \\
\cline{2-9}
& Training  & $\bm{+\hat{C}^{u}}$ &  Test  & $\bm{+\hat{C}^{u}}$ & Training  & $\bm{+\hat{C}^{u}}$ &  Test  & $\bm{+\hat{C}^{u}}$\\
\hline
IPW & $.62\pm.01$ & \bm{$.55\pm.00$} & $.62\pm.01$ & \bm{$.55\pm.01$} & $.25\pm.01$ & \bm{$.12\pm.01$} & $.25\pm.02$ & \bm{$.12\pm.02$} \\
C Forest & $.73\pm.01$ & \bm{$.68\pm.01$} & $.73\pm.02$ & \bm{$.68\pm.01$} & $.46\pm.01$ & \bm{$.39\pm.01$} & $.46\pm.02$ & \bm{$.39\pm.02$} \\
CEVAE & $.76\pm.04$ & \bm{$.71\pm.03$} & $.78\pm.04$ & \bm{$.72\pm.03$} & $.51\pm.06$ & \bm{$.45\pm.04$} & $.53\pm.06$ & \bm{$.48\pm.04$} \\
GANITE & $.70\pm.02$ & \bm{$.62\pm.02$} & $.70\pm.02$ & \bm{$.62\pm.02$} & $.40\pm.03$ & \bm{$.34\pm.02$} & $.41\pm.04$ & \bm{$.36\pm.02$} \\
TARNET & $1.29\pm.00$ & \bm{$.90\pm.00$} & $1.30\pm.01$ & \bm{$.91\pm.00$} & $1.16\pm.01$ & \bm{$.75\pm.01$} & $1.17\pm.01$ & \bm{$.76\pm.01$} \\
CFR & $1.26\pm.00$ & \bm{$.88\pm.00$} & $1.27\pm.00$ & \bm{$.90\pm.00$} & $1.14\pm.01$ & \bm{$.73\pm.01$} & $1.15\pm.00$ & \bm{$.74\pm.00$} \\
\hline
\end{tabular}}
\label{ATE-PEHE}
\end{table}
\subsubsection{Analysis of possible scenarios regarding the learned $\hat{C}^{u}$ (RQ4).}
The proposed VLUCI is applied to the IHDP dataset to explore the possible potential meanings of the inferred $C^{u}$. We apply the NPCI package and set the parameter "$setting = B$" to generate the IHDP dataset. According to \cite{Hill2011Bayesian}, the outcome variable in this scenario is generated by Eq. \eqref{equ102}. That is, $y$ is completely determined by $X$ and $t$ and no unobserved confounder. The phenomenon is also illustrated by the unbiased fitting $\hat{y}(X,t)$ of $X$ and $t$ to $y$ in Fig.~\ref{fig:IHDPCase1}. 
\begin{equation}
\label{equ102}
y(t = 0) \sim N(exp(X+W)\beta_{B},1));\ y(t = 1) \sim N(X\beta_{B} + \omega^{s}_{B},1))
\end{equation}

However, the proposed model still inferred ${C}^{u}$ with practical implications. As shown in Fig.~\ref{fig:IHDPCase2}, the predicted ${C}^{u}$ is highly correlated but not equivalent to the treatment variable $t$. To obtain a biased dataset, during the generation of the IHDP dataset, the samples of non-white mother were removed from the treatment group and the variable indicating the mother's skin colour was hidden. As a result, the proportion of white mother in the treat and control groups was $100\%$ and $37\%$ respectively. It is worth emphasising that the predicted ${C}^{u}$ is in approximate coincidence with this actual ratio. In other words, the VLUCI reproduces the hidden variable of mother's skin colour precisely. The results on the IHDP dataset illustrate that the proposed VLUCI is capable of inferring not only unobserved confounders, but also other potential variables associated with $t$ or $y$, which contribute to a better understanding of the causal mechanisms underlying the data.
\begin{figure}[h]
	\centering
    \begin{subfigure}[t]{0.45\textwidth}
           \centering
           \includegraphics[width=\textwidth]{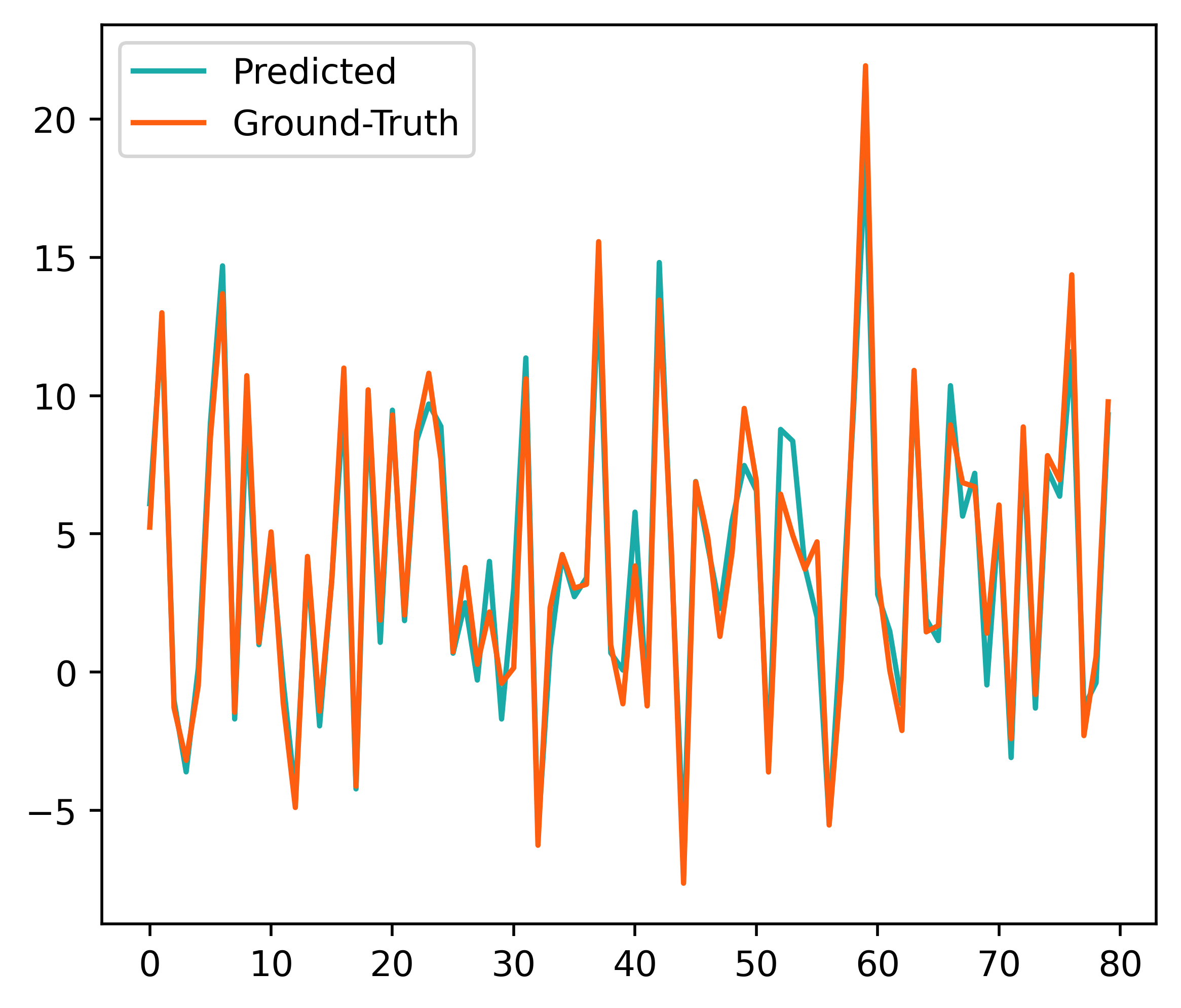}
            \caption{The fitting effects of $X$ and $t$ to $y$.}
            \label{fig:IHDPCase1}
    \end{subfigure}
    \begin{subfigure}[t]{0.45\textwidth}
            \centering
            \includegraphics[width=\textwidth]{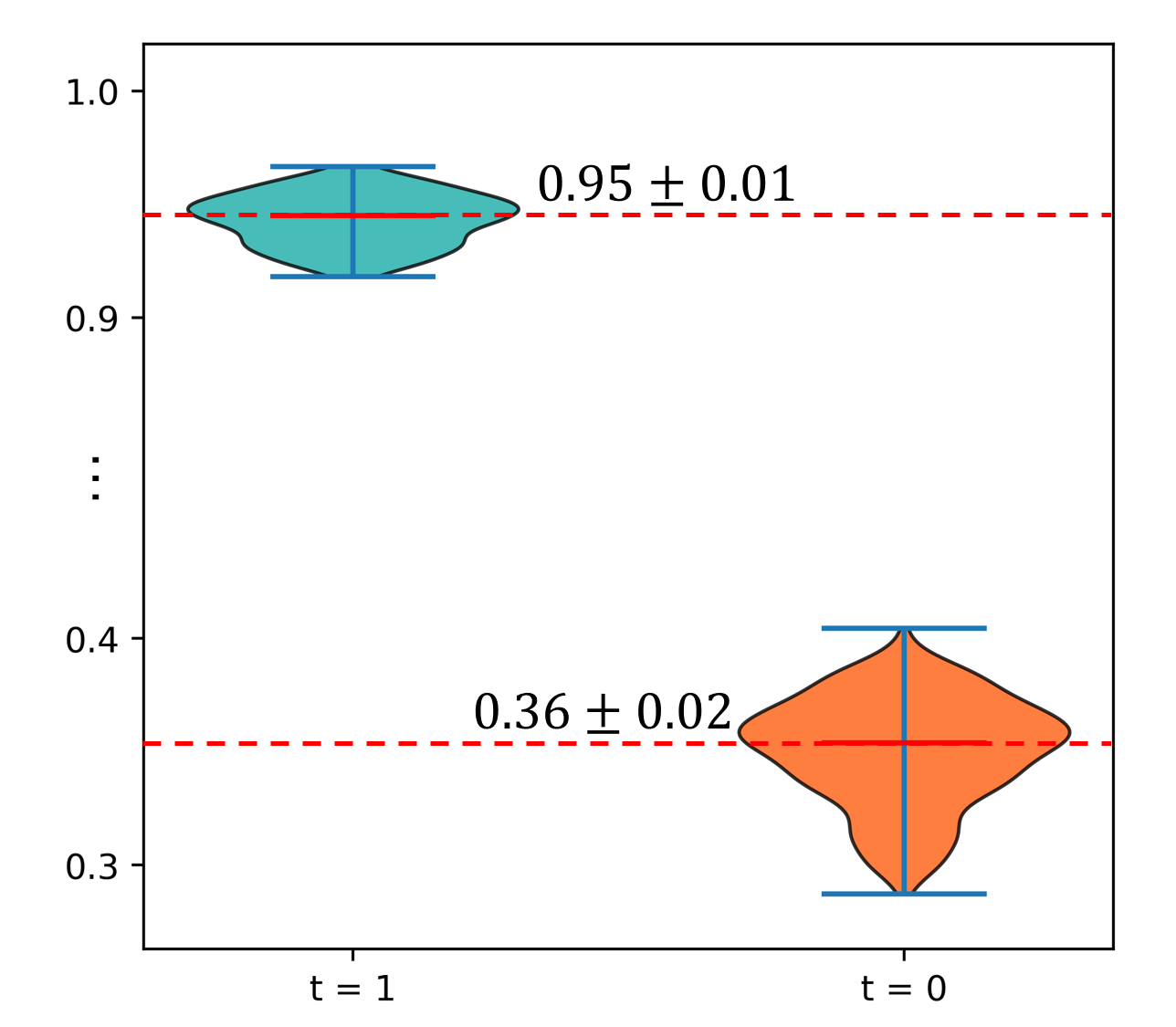}
            \caption{The violin plot of predicted $C^{u}$.}
            \label{fig:IHDPCase2}
    \end{subfigure}
    \caption{The results of applying VLUCI to the IHDP dataset regarding $\hat{y}(X,t)$ and $C^{u}$.}
    \label{FIG:IHDP}
\end{figure}

The insights from the application on dataset IHDP suggest that one should pay attention to the generated distribution of unobserved confounders when applying VLUCI. If the generated distribution is a standard Normal distribution, which may indicate the absence of unobserved confounders, otherwise the VLUCI is valid. Meanwhile, the specific meaning of the generated $C^{u}$ should be understood in the context of the analysis of the model's results at the training process and the actual scenario.
\section{Conclusion}
The presence of unobserved confounders introduces significant uncertainty in counterfactual inference. In this paper, we proposes VLUCI, a novel inference model that derives a reasonable distribution of unobserved confounders while relaxing the untestable assumption of unconfoundedness. The inferred unobserved confounders can be further incorporated into the training of counterfactual inference models to provide a confidence interval estimation for the counterfactual outcomes. The unbiased estimation theoretical proof of causal effects for covariates, along with the VLUCI for inferring unobserved confounders, partially fills a research gap within this subfield. Extensive experiments on synthetic dataset demonstrate that VLUCI has superior performance in generating unobserved confounders and improving the accuracy of counterfactual inferences of state-of-the-art models at both group and individual levels. Moreover, we apply VLUCI to the IHDP dataset which is not highly consistent with the causal assumptions proposed in this paper. Nevertheless, the proposed model still generates an unobserved variable indicating skin color related to the treatment variable, which helps understand the causal mechanism underlying the data and suggests its broad practical applicability. In the future, exploring the actual implications of unobserved confounders generated by the VLUCI in practice is a research topic that deserves attention.

\appendix
\section{Derivation of the optimization objectives for the VLUCI}
%\subsection{Derivation of the optimization objectives for the LCDL}

The derivation for minimizing the KL divergence shown in Eq.~\eqref{equ8} is described below. Firstly, the observable log-likelihood term $log(p(\acute{t},\acute{y}))$ is decomposed, as shown in Eq.~\eqref{equ59}.
\begin{equation}
\label{equ59}
\tag{A.1}
\begin{aligned}
\mathcal{L}&=log(p(\acute{t},\acute{y}))\\
&= log(p(\acute{t},\acute{y}))\sum_{C^{u}}q(C^{u}|\acute{t},\acute{y}) \\
&= log( \frac{p(\acute{t},\acute{y},C^{u})}{p(C^{u}|\acute{t},\acute{y})})\sum_{C^{u}}q(C^{u}|\acute{t},\acute{y}) \\
&= log( \frac{p(\acute{t},\acute{y},C^{u})}{q(C^{u}|\acute{t},\acute{y})} \frac{q(C^{u}|\acute{t},\acute{y})}{p(C^{u}|\acute{t},\acute{y})} )\sum_{C^{u}}q(C^{u}|\acute{t},\acute{y}) \\
&= log( \frac{p(\acute{t},\acute{y},C^{u})}{q(C^{u}|\acute{t},\acute{y})})\sum_{C^{u}}q(C^{u}|\acute{t},\acute{y}) 
+ log( \frac{q(C^{u}|\acute{t},\acute{y})}{p(C^{u}|\acute{t},\acute{y})} )\sum_{C^{u}}q(C^{u}|\acute{t},\acute{y}) \\
&= \mathcal{L}^{lower} + KL(q(C^{u}|\acute{t},\acute{y})||p(C^{u}|\acute{t},\acute{y})) \\
\end{aligned}
\end{equation}

From Eq.~\eqref{equ59}, $\mathcal{L}$ consists of $\mathcal{L}^{lower} $ and $KL(q(C^{u}|\acute{t},\acute{y})||p(C^{u}|\acute{t},\acute{y}))$ on the right-hand side of the equation. The term $KL(q(C^{u}|\acute{t},\acute{y})||p(C^{u}|\acute{t},\acute{y}))$ is always greater than zero, so there exists a variational lower bound $\mathcal{L}^{lower}$ for $\mathcal{L}$. It should be noted that $\mathcal{L}$ is observable and constant. Therefore the objective of minimizing $KL(q(C^{u}|\acute{t},\acute{y})||p(C^{u}|\acute{t},\acute{y}))$ can be translated into maximizing $\mathcal{L}^{lower}$ which is optimizable. Next, $\mathcal{L}^{lower}$ is further decomposed, as shown in Eq.~\eqref{equ60}.
\begin{equation}
\label{equ60}
\tag{A.2}
\begin{aligned}
\mathcal{L}^{lower}&=log( \frac{p(\acute{t},\acute{y},C^{u})}{q(C^{u}|\acute{t},\acute{y})})\sum_{C^{u}}q(C^{u}|\acute{t},\acute{y})\\
&= log( \frac{p(\acute{t},\acute{y}|C^{u})p(C^{u})}{q(C^{u}|\acute{t},\acute{y})})\sum_{C^{u}}q(C^{u}|\acute{t},\acute{y})\\
&= log( \frac{p(C^{u})}{q(C^{u}|\acute{t},\acute{y})})\sum_{C^{u}}q(C^{u}|\acute{t},\acute{y}) 
+ log( p(\acute{t},\acute{y}|C^{u}))\sum_{C^{u}}q(C^{u}|\acute{t},\acute{y})\\ 
&= -KL(q(C^{u}|\acute{t},\acute{y})||p(C^{u})) 
+\mathbb{E}_{q(C^{u}|\acute{t},\acute{y})}(log( p(\acute{t},\acute{y}|C^{u})))
\end{aligned}
\end{equation}

Eq.~\eqref{equ60} illustrates that the objective function for maximising the variational lower bound $\mathcal{L}^{lower}$ can be decomposed into minimising $KL(q(C^{u}|\acute{t},\acute{y})||p(C^{u}))$ and maximising $\mathbb{E}_{q(C^{u}|\acute{t},\acute{y})}(log( p(\acute{t},\acute{y}|C^{u})))$.

For the objective of minimizing $KL(q(C^{u}|\acute{t},\acute{y})||p(C^{u}))$: based on the introduction of the model structure in Fig.~\ref{Architecture}, $q(C^{u}|\acute{t},\acute{y}) \sim N(\mu,\Sigma)$ can be learned through the second stage of the network. Furthermore, we assume that $p(C^{u})$ subjects to the multivariate Gaussian distribution $N(0,I)$. This assumption is natural. On the one hand, the $C^{u}$ is an exogenous variable which is only affected by random noise. On the other hand, it is based on the statement: "The key is to notice that any distribution in d dimensions can be generated by taking a set of $d$ variables that are normally distributed and mapping them through a sufficiently complicated function."\cite{doersch2016tutorial}. The third objective function based on the $KL(q(C^{u}|\acute{t},\acute{y})||p(C^{u}))$ of VLUCI is shown in Eq.~\eqref{equ61}.
\begin{equation}
\label{equ61}
\tag{A.3}
\mathcal{L}(q(C^{u}|\acute{t},\acute{y}),p(C^{u})) = KL(N(\mu,\Sigma)||N(0,I)))
\end{equation}

For the objective of maximising $\mathbb{E}_{q(C^{u}|\acute{t},\acute{y})}(log( p(\acute{t},\acute{y}|C^{u})))$, it can be noted that is a process of finding the expectation of a log-likelihood function. The expectation is approximated through Monte Carlo(MC) simulation. That is random sampling $C^{u}_{sam}$ from the distribution $q(C^{u}|\acute{t},\acute{y})$ and averaging the log-likelihood values over $log( p(\acute{t},\acute{y}|C^{u}_{sam}))$. Furthermore, since the maximum likelihood estimate is equivalent to the least squares estimation, the objective is transformed into minimizing the reconstruction error demonstrated in Eq.~\eqref{equ62}.
\begin{equation}
\label{equ62}
\tag{A.4}
\max: \mathbb{E}_{q(C^{u}|\acute{t},\acute{y})}(log( p(\acute{t},\acute{y}|C^{u}))) \approx 
\min: MSE(\acute{t},\acute{t}(C^{u}_{sam})) + MSE(\acute{y},\acute{y}(C^{u}_{sam},t))  
\end{equation}
Where, $C^{u}_{sam} \sim p(C^{u}|\acute{t},\acute{y}) = N(\mu,\Sigma)$.

% \printcredits

%% Loading bibliography style file
% \bibliographystyle{model1-num-names}
\bibliographystyle{cas-model2-names}
% \bibliographystyle{model1-num}

% Loading bibliography database
\bibliography{mybib}

% \newpage
% \vskip3pt

\end{document}